  \providecommand\BibTeX{{%
    \normalfont B\kern-0.5em{\scshape i\kern-0.25em b}\kern-0.8em\TeX}}}
\begin{document}
\fancyhead{}
%%
%% The "title" command has an optional parameter,
%% allowing the author to define a "short title" to be used in page headers.
\title{Enhanced Doubly Robust Learning for Debiasing Post-Click Conversion Rate Estimation}

%%
%% The "author" command and its associated commands are used to define
%% the authors and their affiliations.
%% Of note is the shared affiliation of the first two authors, and the
%% "authornote" and "authornotemark" commands
%% used to denote shared contribution to the research.
\renewcommand{\thefootnote}{\fnsymbol{footnote}}
\author{Siyuan Guo$^{1,}$\footnotemark[1], Lixin Zou$^{2,}$\footnotemark[2], Yiding Liu$^2$, Wenwen Ye$^2$, Suqi Cheng$^2$, Shuaiqiang Wang$^2$, \\ Hechang Chen$^{1,4,}$\footnotemark[2], Dawei Yin$^2$ and Yi Chang$^{1,3,4,}$\footnotemark[2]}
\affiliation{$^1$School of Artificial Intelligence, Jilin University, $^2$Baidu Inc., $^3$International Center of Future Science, Jilin University
\country{}}
\affiliation{$^4$Key Laboratory of Symbolic Computation and Knowledge Engineering of Ministry of Education, Jilin University
\country{}}
\email{{guosyjlu,zoulixin15,liuyiding.tanh,yewenwen.hi,chengsuqi,shqiang.wang}@gmail.com}
\email{yindawei@acm.org,{chenhc,yichang}@jlu.edu.cn}
% \email{{guosyjlu,zoulixin15,liuyiding.tanh}@gmail.com,two_ye@hotmail.com,{chengsuqi,shqiang.wang}@gmail.com}
% \email{chenhc@jlu.edu.cn,yindawei@acm.org,yichang@jlu.edu.cn}

\renewcommand{\authors}{Siyuan Guo, Lixin Zou, Yiding Liu, Wenwen Ye, Suqi Cheng, Shuaiqiang Wang, Hechang Chen, Dawei Yin, Yi Chang}
% \renewcommand{\shortauthors}{Guo and Zou, et al.}
% \author{Siyuan Guo}
% \authornote{Work performed during an internship at Baidu Inc.}
% \email{guosyjlu@gmail.com}
% \affiliation{School of Artificial Intelligence, Jilin University}

% \author{Lixin Zou}
% \authornote{Corresponding author.}
% \email{zoulixin15@gmail.com}
% \affiliation{Baidu Inc.}

% \author{Yiding Liu}
% \email{liuyiding.tanh@gmail.com}
% \affiliation{Baidu Inc.}

% \author{Wenwen Ye}
% \email{two_ye@hotmail.com}
% \affiliation{Baidu Inc.}

% \author{Suqi Cheng}
% \email{chengsuqi@gmail.com}
% \affiliation{Baidu Inc.}

% \author{Shuaiqiang Wang}
% \email{shqiang.wang@gmail.com}
% \affiliation{Baidu Inc.}

% \author{Hechang Chen}
% \email{chenhc@jlu.edu.cn}
% \affiliation{School of Artificial Intelligence, Jilin University}

% \author{Dawei Yin}
% \email{yindawei@acm.org}
% \affiliation{Baidu Inc.}

% \author{Yi Chang}
% \email{yichang@jlu.edu.cn}
% \affiliation{School of Artificial Intelligence, Jilin University}

%%
%% The abstract is a short summary of the work to be presented in the
%% article.
\begin{abstract}
Post-click conversion, as a strong signal indicating the user preference, is salutary for building recommender systems. However, accurately estimating the post-click conversion rate (CVR) is challenging due to the selection bias, i.e., the observed clicked events usually happen on users' preferred items. Currently, most existing methods utilize counterfactual learning to debias recommender systems. Among them, the doubly robust (DR) estimator has achieved competitive performance by combining the error imputation based (EIB) estimator and the inverse propensity score (IPS) estimator in a doubly robust way. However, inaccurate error imputation may result in its higher variance than the IPS estimator. Worse still, existing methods typically use simple model-agnostic methods to estimate the imputation error, which are not sufficient to approximate the dynamically changing model-correlated target (i.e., the gradient direction of the prediction model). To solve these problems, we first derive the bias and variance of the DR estimator. Based on it, a more robust doubly robust (MRDR) estimator has been proposed to further reduce its variance while retaining its double robustness. Moreover, we propose a novel double learning approach for the MRDR estimator, which can convert the error imputation into the general CVR estimation. Besides, we empirically verify that the proposed learning scheme can further eliminate the high variance problem of the imputation learning. To evaluate its effectiveness, extensive experiments are conducted on a semi-synthetic dataset and two real-world datasets. The results demonstrate the superiority of the proposed approach over the state-of-the-art methods. The code is available at \url{https://github.com/guosyjlu/MRDR-DL}. 
\footnotetext[1]{Work performed during an internship at Baidu Inc.}
\footnotetext[2]{Joint corresponding authors.}

\end{abstract}
%%
%% The code below is generated by the tool at http://dl.acm.org/ccs.cfm.
%% Please copy and paste the code instead of the example below.
%%
\begin{CCSXML}
<ccs2012>
   <concept>
       <concept_id>10002951.10003317.10003347.10003350</concept_id>
       <concept_desc>Information systems~Recommender systems</concept_desc>
       <concept_significance>500</concept_significance>
       </concept>
 </ccs2012>
\end{CCSXML}

\ccsdesc[500]{Information systems~Recommender systems}

\keywords{Selection Bias; Missing-Not-At-Random Data; Doubly Robust; Post-click Conversion Rate Estimation; Recommender System}

%%
%% This command processes the author and affiliation and title
%% information and builds the first part of the formatted document.

\maketitle
\renewcommand{\thefootnote}{\arabic{footnote}}
\section{Introduction}
E-commerce recommender systems aim at not only helping users explore the items of their interests, but also increasing revenues for the platform. Therefore, estimating the post-click conversion rate (CVR), i.e., the probability of an item being purchased after it is clicked, is a crucial task for building such systems in practice. Moreover, post-click conversion feedbacks have been widely recognized as strong signals for the learning of the recommender systems, as they explicitly express the user preference and directly contribute to the gross merchandise volume (GMV) of the platform \cite{gmv, dr-cvr}. However, it is very challenging to model such signals, which are extremely sparse in real-world applications. In particular, the post-click conversion feedbacks can only be observed in clicked events, which make up a tiny fraction of all possible user behaviors, while the other conversion feedbacks for unclicked events are missing. As such, a fundamental problem of CVR estimation is to study the missing mechanism in the post-click conversion feedbacks. 

For simplification, conventional CVR models usually assume that the missing conversion feedbacks are \emph{missing-at-random} (MAR). Such assumption can barely hold under the selection bias and recent studies \cite{rat,ips-implicit-learn,dr-ali} have shown that a recommendation model with MAR assumption often leads to sub-optimal results. On real-world e-commerce platforms, as users are free to click the items that they are likely to want to purchase (i.e., user self-selection), the observed clicked events are not representative samples of all the events, which makes the missing conversions \emph{missing-not-at-random} (MNAR). In other words, the fundamental reason behind the selection bias is that the users' propensities vary from item to item. Here, the propensity is defined as the probability of an item being clicked by a user, i.e., the click-through rate (CTR). Hence, in this paper, we adopt the MNAR assumption when estimating the post-click conversion rate, and focus on addressing the selection bias problem.

In recent years, three unbiased estimators in counterfactual learning have been applied to debiasing the CVR estimation. \textbf{(1) The error imputation based (EIB) estimator} \cite{pmf-debias,eib} computes an imputed error, i.e., the estimated value of the prediction error, for each unclicked event, and then uses it to estimate the true prediction error of all the events. However, this estimator often has a large bias due to the inaccurate error imputation, which will easily mislead the CVR estimation. \textbf{(2) The inverse propensity score (IPS) estimator} \cite{rat, gmcm, esmm} inversely weights the prediction error of each clicked event with its estimated CTR to correct the mismatch between the distributions of the clicked events and unclicked events. Although this estimator is unbiased given the ground-truth CTRs, it typically suffers from a high variance problem, which would lead to sub-optimal results. \textbf{(3) The doubly robust (DR) estimator} \cite{drjl, dr-ali, dr} combines the EIB estimator and IPS estimator to ensure both the low variance and low bias. Its unbiasedness is guaranteed if either the imputed errors or the CTRs are accurate. This property is called the double robustness.

Among the aforementioned estimators, the DR estimator has achieved initial success for debiasing recommender systems \cite{drjl,dr-ali,dr-cvr}. However, there are still two inherent challenges to be solved. Despite the double robustness, the DR estimator may increase the variance of the IPS estimator under inaccurate error imputation, which makes the learning process even complicated and leads to sub-optimal results. Hence, further variance reduction for the DR approaches deserves to investigate. Furthermore, although the DR estimator is more robust than the EIB and IPS estimator, it still requires relatively accurate CTR estimation and error imputation. In terms of the two tasks, the former has been extensively investigated by a lot of works \cite{ctr-deepfm,ctr-din}, whereas the latter rarely investigated. To estimate the imputed errors, previous DR based approaches typically introduce an extra imputation model that is agnostic of the prediction model, such as linear regression \cite{dr}, matrix factorization \cite{drjl}, multilayer perceptron (MLP) \cite{dr-ali}, etc. Here, the imputed errors, utilized as the gradient directions of the prediction model, should be dynamically changing during its learning process. However, simply using model-agnostic methods are not sufficient to approximate such a model-correlated target. Thus, it still calls for a better solution on how to model the error imputation.

To address the above-mentioned challenges, we propose the enhanced doubly robust learning approach for debiasing post-click conversion rate estimation. To tackle the first challenge, we propose to reduce the variance of the DR estimator, by redesigning the goal of the imputation learning ({\it i.e.}, the learning process of the imputation model) as the minimization of its variance \cite{mrdr, mrdr-init}. Specifically, we derive the bias and variance of the DR estimator, based on which we propose the more robust doubly robust (MRDR) estimator as a variant of the DR estimator to derive lower variance while retaining the double robustness. Moreover, inspired by Double DQN \cite{doubleDQN} in reinforcement learning, we propose a novel double learning approach for the MRDR estimator to tackle the second limitation. In particular, we adopt two CVR models with same structure but different parameters. The first one serves as the prediction model to learn from both the imputed errors and the true prediction errors for final CVR estimation. The second one serves as the imputation model to generate the pseudo label using its predicted CVR for each event. During the learning of the prediction model, the imputed error can be directly computed with the pseudo label and the predicted CVR. As such, we convert the error imputation into the general CVR estimation, and further, the imputed errors can be dynamically estimated in a model-correlated way. For the learning of both models, we alternate their learning process to enable them to be mutually regularized. In addition, we periodically update the parameters of the imputation model with the parameters of the prediction model, which is empirically beneficial for eliminating the high variance problem of the imputation learning. Extensive experiments are conducted on both semi-synthetic and real-world datasets to verify the effectiveness of both the proposed MRDR estimator and double learning approach.

The main contributions of this work are summarized as follows.
\begin{itemize}
    \item We conduct theoretical analysis on the bias and variance of the DR estimator, based on which we propose the more robust doubly robust (MRDR) estimator. It can achieve further variance reduction while retaining the double robustness.
    \item To dynamically utilize the information of the prediction model for error imputation, we propose a novel double learning approach for the MRDR estimator, which is also empirically beneficial for addressing the high variance problem of the imputation learning.
    \item Experimental results on the semi-synthetic dataset empirically verify the effectiveness of the proposed MRDR estimator. Furthermore, we conduct extensive experiments on two real-world datasets. The results show that the proposed enhanced doubly robust learning approach MRDR-DL outperforms the state-of-the-art methods.
\end{itemize}

\section{Preliminaries}
In this section, we detail the problem formulation, and introduce some existing unbiased estimators in the post-click conversion setting.

\subsection{Problem Formulation}
Let $\mathcal{U}=\{u_1, u_2, ..., u_m\}$ be the set of $m$ users, $\mathcal{I}=\{i_1, i_2, ..., i_n\}$ be the set of $n$ items, and $\mathcal{D}=\mathcal{U} \times \mathcal{I}$ be the set of all user-item pairs. We denote $\mathbf{R} \in \{0,1\}^{m \times n}$ as the conversion label matrix where each entry $r_{u,i}\in \{0,1\}$ indicates whether a conversion action occurs after user $u$ clicks item $i$. We use $\hat{\mathbf{R}} \in \mathbb{R}^{m \times n}$ to represent the predicted conversion rate matrix, where $ \hat{r}_{u,i} \in [0,1]$ represents the conversion rate predicted by a model. If we have a fully observed conversion label matrix $\mathbf{R}$, the ideal loss function for minimization can be formulated as
\begin{equation}
    \mathcal{L}_{ideal}( \hat{\mathbf{R}})=\frac{1}{|\mathcal{D}|}\sum_{(u,i)\in\mathcal{D}}e_{u,i},
\end{equation}
where $e_{u,i}$ is the prediction error. We usually adopt the cross entropy, $e_{u,i}=CE(r_{u,i}, \hat{r}_{u,i})=-r_{u,i}\log\hat{r}_{u,i}-(1-r_{u,i})\log(1-\hat{r}_{u,i})$ as the optimization goal for binary classification. Let $\mathbf{O}\in\{0,1\}^{m \times n}$ be the click indicator matrix with each entry $o_{u,i}=1$ if user $u$ clicks item $i$, and 0 otherwise. Since only post-click conversions for clicked events can be observed, the naive estimator estimates the ideal loss function by averaging the prediction error for clicked events as
\begin{equation}
\begin{aligned}
    \mathcal{L}_{naive}(\hat{\mathbf{R}})&=\frac{1}{|\mathcal{O}|}\sum_{(u,i)\in\mathcal{O}}e_{u,i}\\
    &=\frac{1}{|\mathcal{O}|}\sum_{(u,i)\in\mathcal{D}}o_{u,i}e_{u,i},
\end{aligned}
\end{equation}
where $\mathcal{O}=\{(u,i)|(u,i)\in \mathcal{D}, o_{u,i}=1\}$ denotes the clicked events. The naive estimator is intuitive and widely adopted by many existing methods. However, due to the selection bias, the conversions for unclicked events are MNAR, which leads to a biased estimation, i.e., $\mathbb{E}_{\mathbf{O}}[\mathcal{L}_{naive}(\hat{\mathbf{R}})]\neq\mathcal{L}_{ideal}( \hat{\mathbf{R}})$.
Previous works \cite{rat,drjl,ips-implicit-learn} have proved that the learning process based on a biased estimator often leads to a sub-optimal prediction model. Hence, it is essential to develop an unbiased estimator to address the MNAR problem. In the following, we will introduce three existing unbiased estimators.

\subsection{Error Imputation Based Estimator}
The error imputation based (EIB) estimator introduces an imputation model to compute the imputed error $\hat{e}_{u,i}$, i.e., the estimated value of the prediction error \cite{eib, pmf-debias}. Leveraging the imputed errors for unclicked events and the prediction errors for clicked events, we estimate the ideal loss function with the EIB estimator as
\begin{equation}
    \mathcal{L}_{EIB}( \hat{\mathbf{R}})=\frac{1}{|\mathcal{O}|}\sum_{(u,i)\in\mathcal{D}}o_{u,i}e_{u,i}+(1-o_{u,i})\hat{e}_{u,i}.
\end{equation}
When the imputed error $\hat{e}_{u,i}$ is accurate for any given unclicked event, the EIB estimator is unbiased, i.e., $\mathbb{E}_{\mathbf{O}}[\mathcal{L}_{EIB}(\hat{\mathbf{R}})]=\mathcal{L}_{ideal}( \hat{\mathbf{R}})$. However, the EIB estimator can hardly achieve accurate error imputation, and thus often has a large bias in practice, which would easily mislead the learning of the prediction model.

\subsection{Inverse Propensity Score Estimator}
The inverse propensity score (IPS) estimator \cite{rat, ips-implicit-learn, gmcm} weights each clicked event with $1/p_{u,i}$, where the propensity $p_{u,i}=\mathbb{P}(o_{u,i}=1)=\mathbb{E}[o_{u,i}]$ refers to the probability of the item being clicked by the user, i.e., the click-through rate (CTR) in the post-click conversion setting. By introducing an auxiliary CTR task to estimate the propensity $\hat{p}_{u,i}$, the IPS estimator can be formulated as
\begin{equation}
    \mathcal{L}_{IPS}( \hat{\mathbf{R}})=\frac{1}{|\mathcal{O}|}\sum_{(u,i)\in\mathcal{D}}\frac{o_{u,i}e_{u,i}}{\hat{p}_{u,i}}.
\end{equation}
The IPS estimator derives an unbiased estimate of the ideal loss function, i.e., $\mathbb{E}_{\mathbf{O}}[\mathcal{L}_{IPS}(\hat{\mathbf{R}})]=\mathcal{L}_{ideal}( \hat{\mathbf{R}})$, when the estimated propensity $\hat{p}_{u,i}$ is accurate for any given clicked event. However, as the clicked events merely account for a small part of $\mathcal{D}$, the CTR is typically assigned with a small value. Hence, the IPS estimator suffers from an especially severe high variance problem.

\subsection{Doubly Robust Estimator}
To address the large bias problem of the EIB estimator and the high variance problem of the IPS estimator, the doubly robust (DR) estimator is adopted by many previous works \cite{dr, drjl, dr-ali}. It combines the EIB estimator and the IPS estimator in a doubly robust way. Particularly, this estimator uses the imputed errors $\hat{e}_{u,i}$ to estimate the prediction errors for all the events, and correct the error deviation $\delta_{u,i}=e_{u,i}-\hat{e}_{u,i}$ for the unclicked events. The propensity $\hat{p}_{u,i}$ is inversely weighted to the error deviation for eliminating the MNAR effect. The loss function of the DR estimator can be defined as
\begin{equation}
    \mathcal{L}_{DR}( \hat{\mathbf{R}})=\frac{1}{|\mathcal{D}|}\sum_{(u,i)\in\mathcal{D}}\hat{e}_{u,i}+\frac{o_{u,i}(e_{u,i}-\hat{e}_{u,i})}{\hat{p}_{u,i}}.
\end{equation}
The DR estimator is unbiased, i.e., $\mathbb{E}_{\mathbf{O}}[\mathcal{L}_{DR}(\hat{\mathbf{R}})]=\mathcal{L}_{ideal}( \hat{\mathbf{R}})$, if either the imputed error $\hat{e}_{u,i}$ of any event or the propensity $\hat{p}_{u,i}$ of any clicked event is accurate. This property is recognized as double robustness. To compute the imputed errors, previous works typically introduce a separate imputation model. Since the imputation learning is actually a regression problem, DR uses the squared loss, 
\begin{equation}
    \mathcal{L}_{e}^{DR}=\sum_{(u,i)\in\mathcal{O}}\frac{(\hat{e}_{u,i}-e_{u,i})^2}{\hat{p}_{u,i}},
\end{equation}
to train the imputation model. The inverse propensity score is weighted to consider the MNAR effect, which also leads to the high variance problem of the imputation learning.

\section{Enhanced Doubly Robust Learning Approach}
\begin{figure*}
    \centering
    \includegraphics[width=\textwidth]{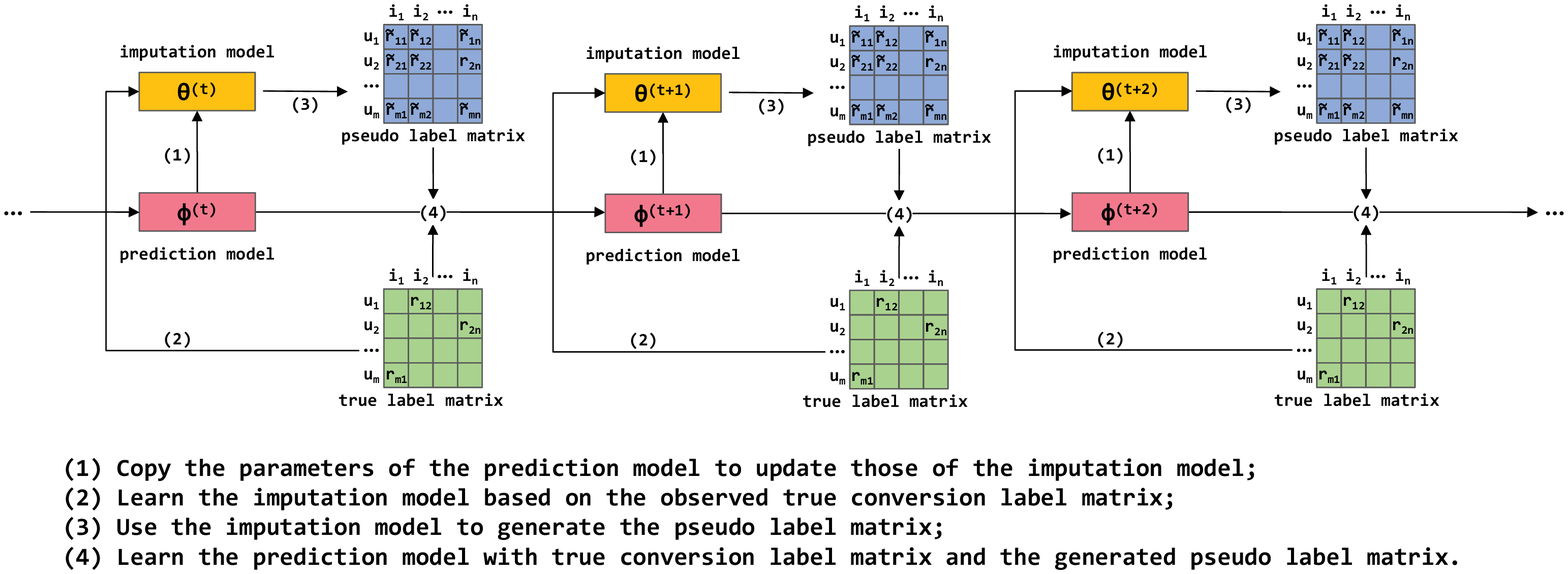}
    \caption{Workflow of the double learning approach.
}
    \label{fig:dl}
\end{figure*}
In this section, we elaborate the proposed enhanced doubly robust learning approach. We first analyze the bias and variance of the doubly robust estimator, based on which we propose the more robust doubly robust estimator for further variance reduction. Then, we detail the proposed novel double learning approach for the MRDR estimator.
\subsection{Bias and Variance Analysis of DR Estimator}
Initially, we formulate the bias of the DR estimator to prove its double robustness.
\begin{theorem}
Let $\delta_{u,i}=e_{u,i}-\hat{e}_{u,i}$ denote the additive error deviation, and $\Delta_{u,i}=1-\frac{p_{u,i}}{\hat{p}_{u,i}}$ the multiplicative propensity deviation. Then, the bias of the DR estimator is
\begin{equation}
    Bias\left[\mathcal{L}_{DR}(\hat{\mathbf{R}})\right]=\frac{1}{|\mathcal{D}|}\left|\sum_{(u,i)\in\mathcal{D}}\Delta_{u,i}\delta_{u,i}\right|.
\end{equation}
\label{bias}
\end{theorem}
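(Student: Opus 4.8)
The plan is to compute the bias directly from its definition $\mathrm{Bias}[\mathcal{L}_{DR}(\hat{\mathbf{R}})]=\bigl|\mathbb{E}_{\mathbf{O}}[\mathcal{L}_{DR}(\hat{\mathbf{R}})]-\mathcal{L}_{ideal}(\hat{\mathbf{R}})\bigr|$, exploiting the fact that the only source of randomness in the DR loss is the click indicator matrix $\mathbf{O}$. The key observation I would use at the outset is that each $o_{u,i}$ is a Bernoulli variable with $\mathbb{E}[o_{u,i}]=p_{u,i}$ (the true propensity), while $\hat{e}_{u,i}$, $e_{u,i}$, and $\hat{p}_{u,i}$ are all treated as deterministic quantities. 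Applying linearity of expectation to the defining sum then lets me replace each $o_{u,i}$ by $p_{u,i}$ term by term.

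First I would write out $\mathbb{E}_{\mathbf{O}}[\mathcal{L}_{DR}(\hat{\mathbf{R}})]=\frac{1}{|\mathcal{D}|}\sum_{(u,i)\in\mathcal{D}}\bigl(\hat{e}_{u,i}+\tfrac{p_{u,i}}{\hat{p}_{u,i}}(e_{u,i}-\hat{e}_{u,i})\bigr)$, substituting $e_{u,i}-\hat{e}_{u,i}=\delta_{u,i}$ to obtain $\frac{1}{|\mathcal{D}|}\sum_{(u,i)\in\mathcal{D}}\bigl(\hat{e}_{u,i}+\tfrac{p_{u,i}}{\hat{p}_{u,i}}\delta_{u,i}\bigr)$. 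Next I would subtract $\mathcal{L}_{ideal}(\hat{\mathbf{R}})=\frac{1}{|\mathcal{D}|}\sum_{(u,i)\in\mathcal{D}}e_{u,i}$ inside the same summation and use the identity $\hat{e}_{u,i}-e_{u,i}=-\delta_{u,i}$ to collapse the two deterministic error terms, leaving $\frac{1}{|\mathcal{D}|}\sum_{(u,i)\in\mathcal{D}}\delta_{u,i}\bigl(\tfrac{p_{u,i}}{\hat{p}_{u,i}}-1\bigr)$. Recognizing that $\tfrac{p_{u,i}}{\hat{p}_{u,i}}-1=-\Delta_{u,i}$ by the definition $\Delta_{u,i}=1-\tfrac{p_{u,i}}{\hat{p}_{u,i}}$, and then taking the absolute value to absorb the sign, yields exactly the claimed expression $\frac{1}{|\mathcal{D}|}\bigl|\sum_{(u,i)\in\mathcal{D}}\Delta_{u,i}\delta_{u,i}\bigr|$.

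Honestly, there is no deep obstacle here; the argument is a short exercise in linearity of expectation followed by careful algebraic cancellation. The one place demanding attention is sign-tracking: it is easy to conflate the residual $\hat{e}_{u,i}-e_{u,i}$ with $\delta_{u,i}$ itself, which would flip a sign, but since the final result is wrapped in an absolute value this only affects the intermediate steps. I would also make explicit why the expectation passes inside the sum without any independence assumptions across entries, namely that linearity of expectation needs none. Finally, I would remark that the factored form $\Delta_{u,i}\delta_{u,i}$ immediately exhibits the double robustness asserted earlier: the bias vanishes whenever every $\delta_{u,i}=0$ (accurate imputation, so all additive deviations are zero) or every $\Delta_{u,i}=0$ (accurate propensities, so all multiplicative deviations are zero), recovering $\mathbb{E}_{\mathbf{O}}[\mathcal{L}_{DR}(\hat{\mathbf{R}})]=\mathcal{L}_{ideal}(\hat{\mathbf{R}})$ as a corollary.
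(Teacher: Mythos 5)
Your computation is correct: taking $\mathbb{E}_{\mathbf{O}}$ term by term with $\mathbb{E}[o_{u,i}]=p_{u,i}$, subtracting $\mathcal{L}_{ideal}$, and cancelling $\hat{e}_{u,i}-e_{u,i}=-\delta_{u,i}$ against the weighted $\delta_{u,i}$ term yields $-\frac{1}{|\mathcal{D}|}\sum_{(u,i)\in\mathcal{D}}\Delta_{u,i}\delta_{u,i}$, whose absolute value is the stated bias. The paper itself gives no proof here, deferring to Theorem 3.2 of the cited reference, and your direct linearity-of-expectation argument is precisely the standard derivation that reference supplies, so there is nothing to criticize beyond noting that your sign bookkeeping and your closing remark on double robustness are both sound.
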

\begin{proof}
    See Theorem 3.2 in \cite{dr-ali} for the proof.
\end{proof}
As shown in Theorem \ref{bias}, the DR estimator is close to the ideal loss function, i.e., $Bias\left[\mathcal{L}_{DR}(\hat{\mathbf{R}})\right]\approx0$, if either $\delta_{u,i}\approx0$ or $\Delta_{u,i}\approx0$, whereas the EIB estimator requires $\delta_{u,i}\approx0$ and the IPS estimator requires $\Delta_{u,i}\approx0$. This property is called double robustness. Then, we derive the variance of the DR estimator.
\begin{theorem}
    The variance of the DR estimator is
    \begin{equation}
        \mathbb{V}_{\mathbf{O}}[\mathcal{L}_{DR}(\hat{\mathbf{R}})]=\frac{1}{|\mathcal{D}|^2}\sum_{(u,i)\in\mathcal{D}}\frac{p_{u,i}(1-p_{u,i})}{\hat{p}_{u,i}^2}(e_{u,i}-\hat{e}_{u,i})^2.
    \end{equation}
    \label{variance}
\end{theorem}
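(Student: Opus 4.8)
The plan is to treat the click indicators $o_{u,i}$ as the only random quantities, holding $e_{u,i}$, $\hat{e}_{u,i}$ and $\hat{p}_{u,i}$ fixed with respect to $\mathbf{O}$, and to reduce the computation to the variance of a sum of scaled, independent Bernoulli variables. Writing $\delta_{u,i}=e_{u,i}-\hat{e}_{u,i}$, I would first observe that $\mathcal{L}_{DR}(\hat{\mathbf{R}})=\frac{1}{|\mathcal{D}|}\sum_{(u,i)\in\mathcal{D}}\hat{e}_{u,i}+\frac{1}{|\mathcal{D}|}\sum_{(u,i)\in\mathcal{D}}\frac{o_{u,i}\delta_{u,i}}{\hat{p}_{u,i}}$, where the first sum is a constant and therefore contributes nothing to the variance. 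Hence $\mathbb{V}_{\mathbf{O}}[\mathcal{L}_{DR}(\hat{\mathbf{R}})]=\frac{1}{|\mathcal{D}|^2}\mathbb{V}_{\mathbf{O}}\left[\sum_{(u,i)\in\mathcal{D}}\frac{o_{u,i}\delta_{u,i}}{\hat{p}_{u,i}}\right]$, and the whole task collapses to evaluating this single variance.

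For the next step I would invoke the standard modelling assumption that the clicks $o_{u,i}$ are mutually independent across user--item pairs, so that the variance of the sum equals the sum of the variances: $\mathbb{V}_{\mathbf{O}}\left[\sum_{(u,i)}o_{u,i}\delta_{u,i}/\hat{p}_{u,i}\right]=\sum_{(u,i)}\mathbb{V}_{\mathbf{O}}\left[o_{u,i}\delta_{u,i}/\hat{p}_{u,i}\right]$. Within each summand, $\delta_{u,i}/\hat{p}_{u,i}$ is a deterministic scalar, so it pulls out of the variance as its square, leaving $\mathbb{V}_{\mathbf{O}}[o_{u,i}\delta_{u,i}/\hat{p}_{u,i}]=(\delta_{u,i}^2/\hat{p}_{u,i}^2)\,\mathbb{V}_{\mathbf{O}}[o_{u,i}]$. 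Finally, since $o_{u,i}$ is Bernoulli with mean $\mathbb{E}[o_{u,i}]=p_{u,i}$, its variance is $\mathbb{V}_{\mathbf{O}}[o_{u,i}]=p_{u,i}(1-p_{u,i})$; substituting and restoring $\delta_{u,i}=e_{u,i}-\hat{e}_{u,i}$ yields exactly $\frac{1}{|\mathcal{D}|^2}\sum_{(u,i)\in\mathcal{D}}\frac{p_{u,i}(1-p_{u,i})}{\hat{p}_{u,i}^2}(e_{u,i}-\hat{e}_{u,i})^2$, as claimed.

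The calculation is routine once the setup is fixed, so the only genuine subtlety --- and the step I would state most explicitly --- is the independence of the $o_{u,i}$, which is what legitimises splitting the variance of the sum into a sum of per-pair variances; without it, cross-covariance terms $\mathrm{Cov}(o_{u,i},o_{u',i'})$ would survive and the clean diagonal form would fail. I would therefore flag this independence (a standard assumption in the IPS/DR literature, consistent with the per-entry Bernoulli treatment already used to derive the bias in Theorem \ref{bias}) as the load-bearing hypothesis, and note that the remaining manipulations are just the linearity that discards the constant shift, additivity of variance under independence, and the elementary Bernoulli variance identity.
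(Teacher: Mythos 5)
Your proposal is correct and follows essentially the same route as the paper's proof: drop the constant term, use independence of the $o_{u,i}$ to turn the variance of the sum into a sum of per-pair variances, pull out the squared deterministic coefficient, and apply the Bernoulli identity $\mathbb{V}[o_{u,i}]=p_{u,i}(1-p_{u,i})$. The only difference is presentational --- you state the cross-pair independence assumption explicitly, whereas the paper invokes it silently when summing the per-term variances.
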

\begin{proof}
    For a single term of the DR estimator, its variance on the click indicator $o_{u,i}$ is
    \begin{eqnarray}
        &&\mathbb{V}_{o_{u,i}}\left[\hat{e}_{u,i}+\frac{o_{u,i}(e_{u,i}-\hat{e}_{u,i})}{\hat{p}_{u,i}}\right]\nonumber\\
        &&=\mathbb{V}_{o_{u,i}}[o_{u,i}]\cdot\left(\frac{e_{u,i}-\hat{e}_{u,i}}{\hat{p}_{u,i}}\right)^2\nonumber\\
        &&=\left(\mathbb{E}_{o_{u,i}}[o_{u,i}^2]-\left(\mathbb{E}_{o_{u,i}}[o_{u,i}]\right)^2\right)\cdot\left(\frac{e_{u,i}-\hat{e}_{u,i}}{\hat{p}_{u,i}}\right)^2\nonumber\\
        &&=\mathbb{E}_{o_{u,i}}[o_{u,i}]\left(1-\mathbb{E}_{o_{u,i}}[o_{u,i}]\right)\cdot\left(\frac{e_{u,i}-\hat{e}_{u,i}}{\hat{p}_{u,i}}\right)^2\nonumber\\
        &&=\mathbb{E}_{o_{u,i}}\left[o_{u,i}\frac{1-p_{u,i}}{\hat{p}_{u,i}^2}(e_{u,i}-\hat{e}_{u,i})^2\right].
        \label{Vmrdr}
    \end{eqnarray}
    Then, summing across all terms of the DR estimator, we can derive the variance:
    \begin{equation}
        \begin{aligned}
        \mathbb{V}_{\mathbf{O}}[\mathcal{L}_{DR}(\hat{\mathbf{R}})]&=\mathbb{V}_{\mathbf{O}}\left[\frac{1}{|\mathcal{D}|}\sum_{(u,i)\in\mathcal{D}}\hat{e}_{u,i}+\frac{o_{u,i}(e_{u,i}-\hat{e}_{u,i})}{\hat{p}_{u,i}}\right]\\
        &=\frac{1}{|\mathcal{D}|^2}\sum_{(u,i)\in\mathcal{D}}\mathbb{V}_{o_{u,i}}\left[\hat{e}_{u,i}+\frac{o_{u,i}(e_{u,i}-\hat{e}_{u,i})}{\hat{p}_{u,i}}\right]\\
        &=\frac{1}{|\mathcal{D}|^2}\sum_{(u,i)\in\mathcal{D}}\frac{p_{u,i}(1-p_{u,i})}{\hat{p}_{u,i}^2}(e_{u,i}-\hat{e}_{u,i})^2.
        \end{aligned}
    \end{equation}
\end{proof}
Similarly, we can derive the variance of the IPS estimator as
\begin{equation}
    \mathbb{V}_{\mathbf{O}}[\mathcal{L}_{IPS}(\hat{\mathbf{R}})]=\frac{1}{|\mathcal{D}|^2}\sum_{(u,i)\in\mathcal{D}}\frac{p_{u,i}(1-p_{u,i})}{\hat{p}_{u,i}^2}e_{u,i}^2.
    \label{Vips}
\end{equation}
Theorem \ref{variance} and Equation \ref{Vips} illustrate that the variance of both estimators depends on the estimated propensity, i.e., the predicted CTR $\hat{p}_{u,i}$, which may lead to a high variance problem. However, it is worth noting that the DR estimator still reduces the variance of the IPS estimator, if any given event satisfies $0\le \hat{e}_{u,i}\le 2e_{u,i}$.

\subsection{More Robust Doubly Robust Estimator}
The theoretical analysis demonstrates that despite the double robustness, the DR estimator suffers from the risk of increasing the variance of the IPS estimator under inaccurate error imputation. Hence, we propose a more robust doubly robust (MRDR) estimator for further variance reduction. Specifically, we propose to learn the imputation model of the DR estimator by minimizing its variance. In other words, it is a variation of the DR estimator, and the only difference is that its loss function for imputation learning is derived from minimizing DR's variance. This means that the proposed MRDR estimator not only retains the double robustness, but also derives a lower variance than the original DR estimator. Based on Equation \ref{Vmrdr}, we take the expectation on $\mathcal{D}$ and estimate $p_{u,i}$ with $\hat{p}_{u,i}$ to derive the loss function of the imputation learning in the MRDR estimator as
\begin{equation}
    \begin{aligned}
    \mathcal{L}_{e}^{MRDR}&=\sum_{(u,i)\in\mathcal{D}}o_{u,i}\cdot\frac{1-\hat{p}_{u,i}}{\hat{p}_{u,i}^2}\cdot(\hat{e}_{u,i}-e_{u,i})^2\\
    &=\sum_{(u,i)\in\mathcal{O}}\frac{1-\hat{p}_{u,i}}{\hat{p}_{u,i}}\cdot\frac{(\hat{e}_{u,i}-e_{u,i})^2}{\hat{p}_{u,i}}.
    \end{aligned}
    \label{MRDR_imputation}
\end{equation}
Comparing the loss function of imputation learning in MRDR with that in DR, we note that MRDR changes the weights from $1/\hat{p}_{u,i}$ to $(1-\hat{p}_{u,i})/\hat{p}_{u,i}^2$, which has the property  
\begin{equation}
\left\{
    \begin{array}{lr}
    \frac{1}{\hat{p}_{u,i}}<\frac{1-\hat{p}_{u,i}}{\hat{p}_{u,i}^2}, &\text{if }\hat{p}_{u,i}<\frac{1}{2}\\
    \frac{1}{\hat{p}_{u,i}}>\frac{1-\hat{p}_{u,i}}{\hat{p}_{u,i}^2}, &\text{if }\hat{p}_{u,i}>\frac{1}{2}
    \end{array}
\right..
\end{equation}
As such, the MRDR estimator increases the penalty of the clicked events with low propensity, and decreases the penalty of the rest of the clicked events. In this way, the imputation model is learned better, which further enables MRDR to reduce the variance of the DR estimator. 

\subsection{Double Learning Approach}
In this subsection, we detail the proposed double learning approach for the MRDR estimator. Given a vector $\mathbf{x}_{u,i}$ encoding all the features of user $u$ and item $i$, previous works typically introduce two separate models: an imputation model $\hat{e}_{u,i}=f_{\theta}(\mathbf{x}_{u,i})$ estimates the imputed errors, and a prediction model $\hat{r}_{u,i}=g_{\phi}(\mathbf{x}_{u,i})$ learns from the imputed errors and true conversion labels to predict the CVR. Here, the imputation model is agnostic of the prediction model, and merely takes the user-item features $\mathbf{x}_{u,i}$ for error imputation. In other words, during the learning process of the prediction model, the imputed error cannot be dynamically estimated. From an optimization perspective, the imputation model plays the role of estimating the gradients for the learning of the prediction model. However, we argue that simply utilizing model-agnostic methods is not sufficient to approximate such a model-correlated target. To this end, we propose a novel double learning approach, which utilizes the pseudo-labelling technique to provide dynamically changing imputed errors for the prediction model. As such, the complicated error imputation is simplified as a general CVR estimation task. We show the workflow of the double learning approach in Figure \ref{fig:dl}. 

Specifically, we introduce two models with the same structure but different parameters: the prediction model $g_{\phi}(\mathbf{x}_{u,i})$ and the imputation model $g_{\theta}(\mathbf{x}_{u,i})$. When we need to learn the prediction model, we first generate the pseudo label $\Tilde{r}_{u,i}$ for each event based on the imputation model. Then, we estimate the imputed error by computing the cross entropy between the predicted conversion rate $\hat{r}_{u,i}$ and the pseudo label $\Tilde{r}_{u,i}$, i.e.,  $\hat{e}_{u,i}=CE(\Tilde{r}_{u,i}, \hat{r}_{u,i})$. In this way, the imputation model $g_{\theta}(\mathbf{x}_{u,i})$ is converted to a CVR estimation task; further to this, the original regression problem is converted to a binary classification problem. Therefore, we replace the squared loss term $(\hat{e}_{u,i}-e_{u,i})^2$ in Equation \ref{MRDR_imputation} with a cross-entropy term. The imputation learning process of the MRDR estimator is thereby redesigned as
\begin{equation}
    \mathcal{L}_{e}^{MRDR}(\theta)=\sum_{(u,i)\in\mathcal{O}}\frac{1-\hat{p}_{u,i}}{\hat{p}_{u,i}^2}\cdot CE(r_{u,i}, g_\theta(\mathbf{x}_{u,i}))+\lambda \|\theta\|^2_F,
    \label{imputation learning}
\end{equation}
where $\theta$ denotes all the parameters of the imputation model $g_{\theta}(\mathbf{x}_{u,i})$ and $\lambda$ controls the $L_2$ regularization strength to prevent overfitting. Note that, although we change the original formulation of the loss function of the imputation model in MRDR, the idea of increasing the penalty of the low-propensity clicked events and decreasing the penalty of the rest is kept. Meanwhile, we formulate the learning of the prediction model as
\begin{equation}
   \mathcal{L}_{r}^{MRDR}(\phi)=\sum_{(u,i)\in\mathcal{D}}\hat{e}_{u,i}+\frac{o_{u,i}(e_{u,i}-\hat{e}_{u,i})}{\hat{p}_{u,i}}+\mu\|\phi\|^2_F,
   \label{prediction learning}
\end{equation}
where $\phi$, $e_{u,i}=CE(r_{u,i}, g_{\phi}(\mathbf{x}_{u,i}))$ and $\hat{e}_{u,i}=CE(g_{\theta}(\mathbf{x}_{u,i}), g_{\phi}(\mathbf{x}_{u,i}))$ denote all the parameters of the prediction model $g_\phi(\mathbf{x}_{u,i})$, the prediction error, and the imputed error, and $\mu$ controls the $L_2$ regularization strength to prevent overfitting.

Inspired by Double DQN \cite{doubleDQN}, we redesign the learning approach of the both models. Generally, we alternate the learning process between the imputation model and the prediction model via mini-batch stochastic gradient descent. As such, two models regularize each other and jointly reach convergence. Since that the MRDR estimator merely enhances the inverse propensity weight of the imputation learning into $(1-\hat{p}_{u,i})/\hat{p}_{u,i}^2$, it suffers from the high variance of the imputation learning, which also happens in the DR estimator as mentioned in Section 2.4. Therefore, each time before learning the imputation model, we update its parameters with those of the prediction model, i.e., $\theta:=\phi$. In this way, the imputation model will be periodically corrected, and the information that the enhanced inverse propensity weight brings is kept. We empirically demonstrate that such learning scheme is beneficial for eliminating the high variance problem of the imputation learning. We summarize the proposed enhanced doubly robust learning approach, named MRDR-DL, in Algorithm 1.

\begin{algorithm}[t]
\caption{The Proposed Enhanced Doubly Robust Learning Approach, MRDR-DL}
\LinesNumbered
\KwIn{$\mathcal{O}$, $\mathcal{D}$, $\hat{p}$}
\KwOut{$\phi$}
Initialize the parameters $\theta$, $\phi$\\
\While{stopping criteria is not satisfied}{
    $\theta:=\phi$\\
	\For {number of steps for training the imputation model}{
	Sample a batch of clicked events from $\mathcal{O}$\\
	Update $\theta$ by descending along the gradient $\bigtriangledown_{\theta}\mathcal{L}_e^{MRDR}(\theta)$\\
	}
	Generate pseudo label $\Tilde{r}_{u,i}$ for any event $\forall (u,i)\in\mathcal{D}$\\
	\For {number of steps for training the prediction model}{
	Sample a batch of events from $\mathcal{D}$\footnote{}\\
	Update $\phi$ by descending along the gradient $\bigtriangledown_{\phi}\mathcal{L}_r^{MRDR}(\phi)$\\
	}
}
\end{algorithm}
\footnotetext[1]{Due to the sparsity of the clicked events, we decrease the sample probability of the unclicked events in practice.}
\section{Semi-synthetic Experiments}
Following previous works \cite{rat, drjl, dr-cvr}, we conduct semi-synthetic experiments to investigate the following research question (RQ).
\begin{enumerate}
    \item[\textbf{RQ1.}] Does the MRDR estimator lead to more accurate loss estimation than other estimators?
\end{enumerate}
\subsection{Experimental Setup}
\subsubsection{Dataset and preprocessing} 
\begin{table}[t]
\begin{tabular}{lccc}
\toprule
 & ML 100K & Coat Shopping & Yahoo! R3 \\
 \midrule
\#users & 943 & 290 & 15400 \\
\#items & 1682 & 300 & 1000 \\
\#MNAR ratings & 100000 & 6960 & 311704 \\
\#MAR ratings & 0 & 4640 & 54000 \\
\bottomrule
\end{tabular}
\caption{Statistic details of the datasets.}
\label{statistic}
\end{table}
To compute the accuracy of the estimated loss, we need a fully observed conversion label matrix, which is unavailable in real-world datasets. Thus, we create a semi-synthetic evaluation dataset using the MovieLens (ML) 100K\footnote{\url{https://grouplens.org/datasets/movielens/}} \cite{movielens} dataset in order to allow us to conduct the semi-synthetic experiment. The statistical details of the dataset are presented in Table \ref{statistic}. We employ the following preprocessing procedures \cite{dr-cvr} to convert the explicit feedback setting to the post-click conversion setting, and derive a fully observed conversion label matrix and a click indicator matrix.

(1) Use matrix factorization \cite{mf} to complete the rating matrix, but the predicted ratings are unrealistically high for all user-item pairs. To match a more realistic rating distribution $[p_1, p_2, p_3, p_4, p_5]$ given in \cite{ratio}, we sort all the ratings in ascending order, assign a value of 1 to the bottom $p_1$ fraction of the matrix entries, assign a value of 2 to the next $p_2$ fraction, and so on.

(2) Transform the predicted ratings $R_{u,i}\in\{1,2,3,4,5\}$ into CTR $p_{u,i}\in(0,1)$ with $p_{u,i}=p\alpha^{\min(4, 6-R_{u,i})}$, where $p$ is set to 1 and $\alpha$ is set to 0.5 in our experiments.

(3) Transform the predicted ratings $R_{u,i}$ into true CVR $r_{u,i}^{true}$ by correspondingly replacing the rating $\{1, 2, 3, 4, 5\}$ with the conversion rate $\{0.1, 0.3, 0.5, 0.7, 0.9\}$. Note that we can only observe the binary conversion labels rather than the true values of the CVR in practice. Thus, we simply assign fixed values to them based on different predicted ratings. 

(4) Sample the binary click indicator and conversion label with the Bernoulli sampling; that is, 
\begin{equation}
    o_{u,i}\sim Bern(p_{u,i}), r_{u,i}\sim Bern(r_{u,i}^{true}),\forall(u,i)\in\mathcal{D},
    \label{sample}
\end{equation}
where $Bern(\cdot)$ denotes the Bernoulli distribution. Thereafter, we can derive a fully-observed conversion label matrix $\mathbf{R}$ and a click indicator matrix $\mathbf{O}$. 

\subsubsection{Experimental details}
Given a predicted CVR matrix $\hat{\mathbf{R}}$, we can directly compute the ideal loss by averaging the prediction error of each entry between $\mathbf{R}$ and $\hat{\mathbf{R}}$. In contrast, the estimators derive the estimated loss with partial entries in $\mathbf{R}$ whose corresponding click indicators $o_{u,i}$ equal 1. To evaluate the performance of loss estimation, we use the following five predicted CVR matrices \cite{rat,drjl} for comparison.
\begin{itemize}
    \item \textbf{ONE}: The predicted conversion rate $\hat{r}_{u,i}$ is identical to the true CVR $r_{u,i}^{true}$, except that $|\{(u,i)|r_{u,i}^{true}=0.9\}|$ randomly selected true CVR of 0.1 are flipped to 0.9.
    \item \textbf{THREE}: Same as ONE, but flipping true CVR of 0.3 instead.
    \item \textbf{FIVE}: Same as ONE, but flipping true CVR of 0.5 instead.
    \item \textbf{SKEW}: The predicted conversion rate is sampled from the Gaussian distribution $\mathcal{N}(\mu=r_{u,i}^{true}, \sigma=\frac{1-r_{u,i}^{true}}{2})$, and clipped to $[0.1, 0.9]$.
    \item \textbf{CRS}: The predicted conversion rate $\hat{r}_{u,i}=0.1$ if the true CVR $r_{u,i}^{true}\le0.7$.  Otherwise, $\hat{r}_{u,i}=0.5$.
\end{itemize}
We compare the MRDR estimator with the naive, EIB, IPS, and DR estimators. We estimate the propensity as $\frac{1}{\hat{p}_{u,i}}=\frac{1-\beta}{p_{u,i}}+\frac{\beta}{p_{e}}$, where $p_e=\frac{1}{|\mathcal{D}|}\sum_{(u,i)\in\mathcal{D}}o_{u,i}$, and $\beta$ is set to 0.5 to introduce noises.  For EIB and DR, the imputed error is computed as $\hat{e}_{u,i}=CE(\frac{\sum_{(u,i)\in\mathcal{D}}r_{u,i}/\hat{p}_{u,i}}{\sum_{(u,i)\in\mathcal{D}}1/\hat{p}_{u,i}},\hat{r}_{u,i})$. For MRDR, we compute the imputed errors as $\hat{e}_{u,i}=CE(\frac{\sum_{(u,i)\in\mathcal{D}}(1-\hat{p}_{u,i})r_{u,i}/\hat{p}_{u,i}^2}{\sum_{(u,i)\in\mathcal{D}}1/\hat{p}_{u,i}},\hat{r}_{u,i})$.
\subsubsection{Evaluation metric}
We compare the performance of the five estimators by the relative error (RE) as
\begin{equation}
    \text{RE}(\mathcal{L}_{estimator})=\frac{\left|\mathcal{L}_{ideal}(\hat{\mathbf{R}})-\mathcal{L}_{estimator}(\hat{\mathbf{R}})\right|}{\mathcal{L}_{ideal}(\hat{\mathbf{R}})},
\end{equation}
where $\mathcal{L}_{estimator}$ denotes the estimator to be compared. RE evaluates the accuracy of the estimated loss, and a smaller value of MRE means a higher accuracy.
\subsection{Experiment Results (RQ1)}
\begin{table}[t]
\begin{tabular}{lccccc}
\toprule
      & naive  & EIB    & IPS    & DR     & MRDR            \\ \midrule
ONE   & 0.0686 & 0.5427 & 0.0346 & 0.0131 & \textbf{0.0073} \\
THREE & 0.0792 & 0.5869 & 0.0401 & 0.0172 & \textbf{0.0047} \\
FIVE  & 0.1023 & 0.6152 & 0.0515 & 0.0138 & \textbf{0.0119} \\
SKEW  & 0.0255 & 0.3574 & 0.0124 & 0.0081 & \textbf{0.0013} \\
CRS   & 0.1773 & 0.0610 & 0.0888 & 0.0551 & \textbf{0.0503} \\
\bottomrule
\end{tabular}
\caption{RE of the five estimators compared to the ideal loss.}
\label{synthetic}
\end{table}
In Table \ref{synthetic}, we report the averaged RE of the five estimators over 20 times of sampling with Equation \ref{sample}. We can see that IPS, DR and MRDR estimators outperform the naive estimator in every setting. This is caused by the selection bias that we introduce by controlling the propensity $p_{u,i}$. In contrast, the EIB estimator derives the worst RE in four settings; this is mainly due to the large bias of the heuristic error imputation. Additionally, the DR estimator improves the performance of the IPS estimator by jointly considering the imputed errors and the estimated propensities. Over all the settings, the MRDR estimator achieves the best performance, which can be attributed to both the double robustness and the reduced variance. Overall, the results conclude that our proposed method can achieve more accurate loss estimation. Next, we further evaluate our method on the task of CVR estimation on real-world datasets.

\section{Real-world Experiments}
\begin{table*}
\begin{tabular}{ccccccccc}
\hline
 & \multicolumn{1}{l}{} & \multicolumn{3}{c}{DCG@K} &  & \multicolumn{3}{c}{Recall@K} \\ \cline{3-5} \cline{7-9} 
\multicolumn{1}{c}{Datasets} & Methods & K=2 & K=4 & K=6 &  & K=2 & K=4 & K=6 \\ \hline
\multicolumn{1}{c}{\multirow{4}{*}{Coat Shopping}} & naive & 0.6694$\pm$0.0136  & 0.9432$\pm$0.0138 & 1.1321$\pm$0.0126 &  & 0.8054$\pm$0.0159 & 1.3903$\pm$0.0225 & 1.8991$\pm$0.0233 \\
\multicolumn{1}{c}{} & IPS &0.7093$\pm$0.0232  &0.9552$\pm$0.0223  &1.1248$\pm$0.0217  &  &0.8249$\pm$0.0298  &1.3520$\pm$0.0353  &1.8078$\pm$0.0399\\
\multicolumn{1}{c}{} & DR-JL &0.6771$\pm$0.0273  &0.9266$\pm$0.0282  &1.0962$\pm$0.0272  &  &0.7949$\pm$0.0337  &1.3286$\pm$0.0420  &1.7849$\pm$0.0456 \\
\multicolumn{1}{c}{} & MRDR-DL &\textbf{0.7219}$\pm$0.0211  &\textbf{0.9905}$\pm$0.0204  &\textbf{1.1696}$\pm$0.0217  &  &\textbf{0.8499}$\pm$0.0265  &\textbf{1.4249}$\pm$0.0321  &\textbf{1.9060}$\pm$0.0430 \\ \hline \hline
\multirow{4}{*}{Yahoo! R3} & Naive &0.5469$\pm$0.0058  &0.7466$\pm$0.0049  &0.8714$\pm$0.0040  &  &0.6479$\pm$0.0066  &1.0745$\pm$0.0074  &1.4098$\pm$0.0062\\
 & IPS &0.5502$\pm$0.0018  &0.7520$\pm$0.0018  &0.8751$\pm$0.0014  &  &0.6545$\pm$0.0021  &1.0797$\pm$0.0025  &1.4168$\pm$0.0025\\
 & DR-JL &0.5310$\pm$0.0045  &0.7273$\pm$0.0053  &0.8512$\pm$0.0045  &  &0.6292$\pm$0.0049  &1.0495$\pm$0.0082  &1.3822$\pm$0.0081\\
 & MRDR-DL &\textbf{0.5561}$\pm$0.0058  &\textbf{0.7549}$\pm$0.0023  &\textbf{0.8811}$\pm$0.0036  &  &\textbf{0.6595}$\pm$0.0074  &\textbf{1.0846}$\pm$0.0054  &\textbf{1.4237}$\pm$0.0059\\ \hline
\end{tabular}
\caption{A comparison of the overall performance of MRDR-DL and competing methods on two real-world datasets.}
\label{overall}
\end{table*}
In this section, we compare the proposed learning approach with other existing debiasing approaches using real-world datasets. We anticipate the experimental results to answer the following RQs.
\begin{enumerate}
    \item[\textbf{RQ2.}] Does the proposed approach MRDR-DL lead to higher debiasing performance than existing approaches?
    \item[\textbf{RQ3.}] What influence do the various designs have on the proposed approach MRDR-DL?
    \item[\textbf{RQ4.}] How does the sample ratio of unclicked events to clicked events influence the performance of MRDR-DL?
    \item[\textbf{RQ5.}] How does the proposed double learning approach work for both the imputation model and the prediction model? 
\end{enumerate}
\subsection{Experimental Setup}
\subsubsection{Datasets and preprocessing}
To evaluate the performance of the unbiased CVR estimation, we need an MAR test set. However, as stated in \cite{dr-ali}, we cannot force users to randomly click items in order to generate unbiased data for CVR estimations. Previous work \cite{dr-cvr} simulates the unbiased CVR estimation setting by using the datasets with specific properties. First, the datasets need to contain explicit feedback, which can reveal ground-truth user preference information. Next, the datasets need to contain an additional MAR test set, where users are asked to rate randomly selected sets of items. This enables us to evaluate the performance of the unbiased CVR estimation. To the best of our knowledge, there are only two publicly available datasets that satisfy these requirements, i.e., Coat Shopping\footnote{\url{https://www.cs.cornell.edu/~schnabts/mnar}} and Yahoo! R3\footnote{\url{http://webscope.sandbox.yahoo.com/}}. The statistical details for both datasets are presented in Table \ref{statistic}.

For both the MNAR data and the MAR data of both datasets, we follow \cite{dr-cvr} and employ the following preprocessing procedure.

(1) We define the binary click indicator as $o_{u,i}=1$ if the item $i$ is rated by user $u$, and $o_{u,i}=0$ otherwise.

(2) We define the binary conversion label as $r_{u,i}=1$ if the item $i$ is rated greater than or equal to 4 by user $u$, and $r_{u,i}=0$ otherwise.

(3) We derive the post-click conversion dataset as $\{(u,i,r_{u,i})|o_{u,i}=1,\forall (u,i)\in\mathcal{D}\}$.

For both datasets, we randomly split the MNAR datasets into training (90\%) and validation (10\%) sets, while the MAR datasets are kept as the test sets. Following the previous works \cite{ips-implicit, dr-cvr}, we filter out users who have no conversion records in the test set.

\subsubsection{Baselines}
We compare the proposed method with the following baselines:
\begin{itemize}
    \item \textbf{Naive}: It simply uses the naive estimator as the loss function to estimate CVR.
    \item \textbf{IPS} \cite{rat}: It derives the IPS estimator as the loss function by estimating the CTR as the propensity score.
    \item \textbf{DR-JL} \cite{drjl}: It utilizes the DR estimator by jointly learning the imputation model and prediction model.
\end{itemize}
Due to the high bias problem, the EIB estimator is widely recognized as a weak baseline \cite{drjl,dr-cvr,rat}, and thus is not included in our comparison. In our experiments, both the CTR and the CVR are estimated by factorization machine (FM) \cite{fm}.

\subsubsection{Experimental Protocols}
We adopt the mini-batch Adam to optimize all the methods, with the default learning rate set at 0.001. We fix the mini-batch size to 1024 for both datasets. In terms of FM, the embedding size is fixed as 64. We tune the $L_2$ regularization coefficient $\lambda$ in the range of $\{1e^{-5}, 1e^{-4}, ..., 1\}$. Note that, for DR based methods, we apply a grid search when tuning the $L_2$ regularization coefficient of the imputation model and the prediction model; also, the sample ratio for unclicked events to clicked events is tuned in the range of $\{2, 4, 6, 8\}$. For CTR estimation, we fix the negative sampling ratio to 4.

For all the methods, we first choose the best hyper-parameters based on the validation set. Then, we perform the early stopping strategy (which applies if the model performance does not improve for five epochs) and report the best test result from the best-performing model on the validation set.

We use recall and discounted cumulative gain (DCG) to evaluate the debiasing performance of all the methods. We calculate both metrics for each user in the test set and report the average score.

\subsection{Overall Performance (RQ2)}
Table \ref{overall} shows the overall performance in terms of DCG@K and Recall@K ($K\in\{2,4,6\}$) on two real-world datasets. To reduce the effect of randomness, we repeat the experiments 100 times for Coat Shopping and 20 times for Yahoo! R3, and then report the mean and standard deviation for each. The best results are highlighted in boldface. From the table, we can see that the debiasing methods, IPS and MRDR-DL, outperform Naive for both datasets, demonstrating the necessity of handling the selection bias in the CVR estimation. Meanwhile, we find that although DR-JL utilizes the unbiased DR estimator, it still gives the worst performance on both datasets. One possible explanation for this is that DR-JL is originally designed for debiasing explicit MNAR feedbacks; as such, its joint learning approach may not be applicable to CVR estimation. Overall, the proposed method MRDR-DL consistently outperforms other methods on both datasets, which verifies the effectiveness of both the proposed MRDR estimator and the double learning approach.

\subsection{Ablation Study (RQ3)}
\begin{table*}
\begin{tabular}{ccccccccccccc}
\hline
 &  & \multicolumn{5}{c}{DCG@K} &  & \multicolumn{5}{c}{Recall@K} \\ \cline{3-7} \cline{9-13} 
Datasets & Methods & K=2 & K=3 & K=4 & K=5 & K=6 &  & K=2 & K=3 & K=4 & K=5 & K=6 \\ \hline
\multirow{4}{*}{Coat Shopping} & MRDR-DL & 0.7219 & 0.8728 & 0.9905 & 1.0878 & 1.1695 &  & 0.8499 & 1.1518 & 1.4249 & 1.6765 & 1.9060 \\
 & DR-DL & 0.7205 & 0.8670 & 0.9806 & 1.0778 & 1.1601 &  & 0.8438 & 1.1368 & 1.4004 & 1.6517 & 1.8827 \\
 & MRDR-JL & 0.6948 & 0.8442 & 0.9613 & 1.0582 & 1.1442 &  & 0.8227 & 1.1215 & 1.3935 & 1.6439 & 1.8852 \\
 & MRDR-DL with SL & \textbf{0.7255} & 0.8720 & 0.9871 & 1.0827 & 1.1651 &  & \textbf{0.8504} & 1.1434 & 1.4107 & 1.6580 & 1.8892 \\ \hline \hline
\multirow{4}{*}{Yahoo! R3} & MRDR-DL & 0.5561 & 0.6694 & 0.7549 & 0.8234 & 0.8811 &  & 0.6595 & 0.8860 & 1.0846 & 1.2616 & 1.4237 \\
 & DR-DL & 0.5463 & 0.6602 & 0.7459 & 0.8145 & 0.8714 &  & 0.6484 & 0.8762 & 1.0752 & 1.2525 & 1.4123 \\
 & MRDR-JL & 0.5546 & 0.6668 & 0.7544 & 0.8221 & 0.8786 &  & 0.6584 & 0.8828 & \textbf{1.0862} & 1.2612 & 1.4199 \\
 & MRDR-DL with SL & 0.5321 & 0.6439 & 0.7287 & 0.7963 & 0.8538 &  & 0.6298 & 0.8535 & 1.0503 & 1.2251 & 1.3863 \\ \hline
\end{tabular}
\caption{Ablation study of MRDR-DL on two real-world datasets.}
\label{as}
\end{table*}

\begin{figure*}
    \centering
    \includegraphics[width=\textwidth]{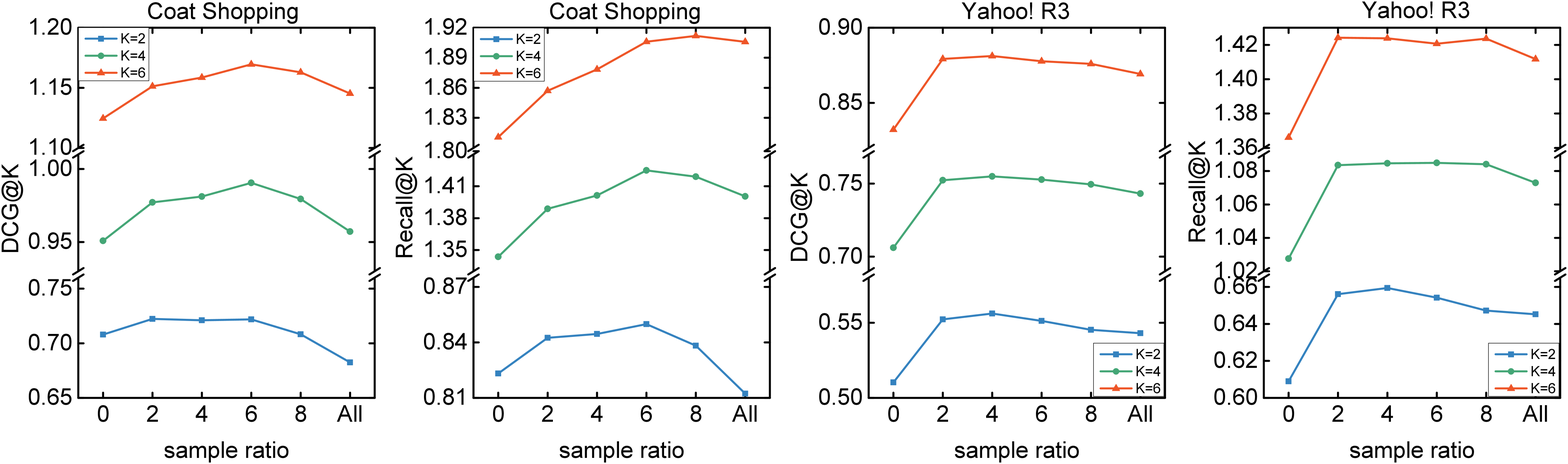}
    \caption{Effect of the sample ratio of unclicked events to clicked events. "All" means that we sample from all the events.}
    \label{ps}
\end{figure*}

To apply the DR estimator to the post-click conversion setting, the proposed method, MRDR-DL, has specific design features. In this section, we will analyze their respective impacts on the method's performance via an ablation study. The experimental results for MRDR-DL and its three variants on two datasets are summarized in Table \ref{as}. The results that are better than MRDR-DL are highlighted in boldface. We detail the variants and analyze their respective effects as follows.

(1) DR-DL: We replace the MRDR estimator with the DR estimator, i.e., we change the weights of the imputation learning from $(1-\hat{p}_{u,i})/\hat{p}_{u,i}^2$ to $1/\hat{p}_{u,i}$. The results imply that enhancing the weights to adjust the penalty for the clicked events based on varying propensities is conducive to the variance reduction of the DR estimator, and further improves the performance of the prediction model.

(2) MRDR-JL: We alternate the training of the imputation model and the prediction model without sharing the parameters periodically (i.e., we skip Step 3 of the Algorithm 1). The experiment results verify the necessity of periodically correcting the imputation model based on the prediction model, which is empirically beneficial for eliminating the high variance problem of the imputation learning. 

(3) MRDR-DL with SL: We replace the cross-entropy term of the imputation learning with the squared loss term, which is theoretically derived from the variance of the DR estimator. The results of the variant are consistent with MRDR-DL on Coat Shopping and significantly better than MRDR-DL on Yahoo! R3. One reason is that the squared loss aims at minimizing the deviation between imputed errors and true prediction errors, whereas the pseudo-label generation is essentially a binary classification problem. Hence, it is more intuitive to have cross entropy as the optimization goal.

\subsection{Parameter Sensitivity Study (RQ4)}
By jointly considering both clicked and unclicked events, DR based estimators can enjoy the double robustness. To investigate the impact of the unclicked events for the proposed MRDR-DL method, we vary the sample ratio of unclicked events to clicked events in the range of \{0, 2, 4, 6, 8, All\}. Here, "All" means that the sample ratio is set to the maximum possible value, which is 12.5 for Coat Shopping and 49.4 for Yahoo! R3. Figure \ref{ps} shows DCG@K and Recall@K for MRDR-DL with respect to different sample ratios on both datasets. As we can see, MRDR-DL with a sample ratio of 0 (i.e., we merely sample from the clicked events) derives the worst performance in most settings. This shows that the well-learned imputation model enables the unclicked events to provide the prediction model with useful information. Furthermore, we find that sampling from all the events adversely hurts the performance of the prediction model, even though we should in theory. One reason for this might be that clicked events are typically sparse in the real-world datasets, meaning that we cannot ensure that the prediction model obtains sufficient information. For both datasets, the optimal sample ratio is around 4 to 8. Setting the sample ratio too conservatively or too aggressively may adversely affect the prediction performance.

\subsection{Analysis of the Double Learning Approach (RQ5)}
In this subsection, we will further investigate the proposed double learning approach. We plot the training curves of the prediction model and the imputation model of MRDR-DL on Coat Shopping in Figure \ref{curve}. In the proposed method MRDR-DL, the prediction model aims at estimating CVR, while the imputation model aims at computing the imputed errors. Hence, we adopt DCG@4, and mean absolute error (MAE) for imputed errors and true prediction errors, respectively, to evaluate their testing performance. As shown in Figure \ref{ps}, the training loss of the prediction model slightly fluctuates in the first 300 epochs before gradually reaching convergence. In contrast, the training curve of the imputation model is more stable. The reason for this is that the imputation model is not well-trained enough to provide the prediction model with sufficiently accurate information at the very beginning, whereas the imputation model is trained on clicked events with ground-truth labels. Further epochs of the double learning approach enable both models to exchange their information periodically. In this way, both models are jointly well-learned, reaching convergence together after about 900 epochs. Note that the testing curve of the prediction model fluctuates in the training process. This is reasonable because we train it with point-wise loss (i.e., cross entropy), whereas we evaluate it with a list-wise metric (i.e., DCG@4) when checking its debiasing performance.
\begin{figure}[t]
    \centering
    \includegraphics[width=\linewidth]{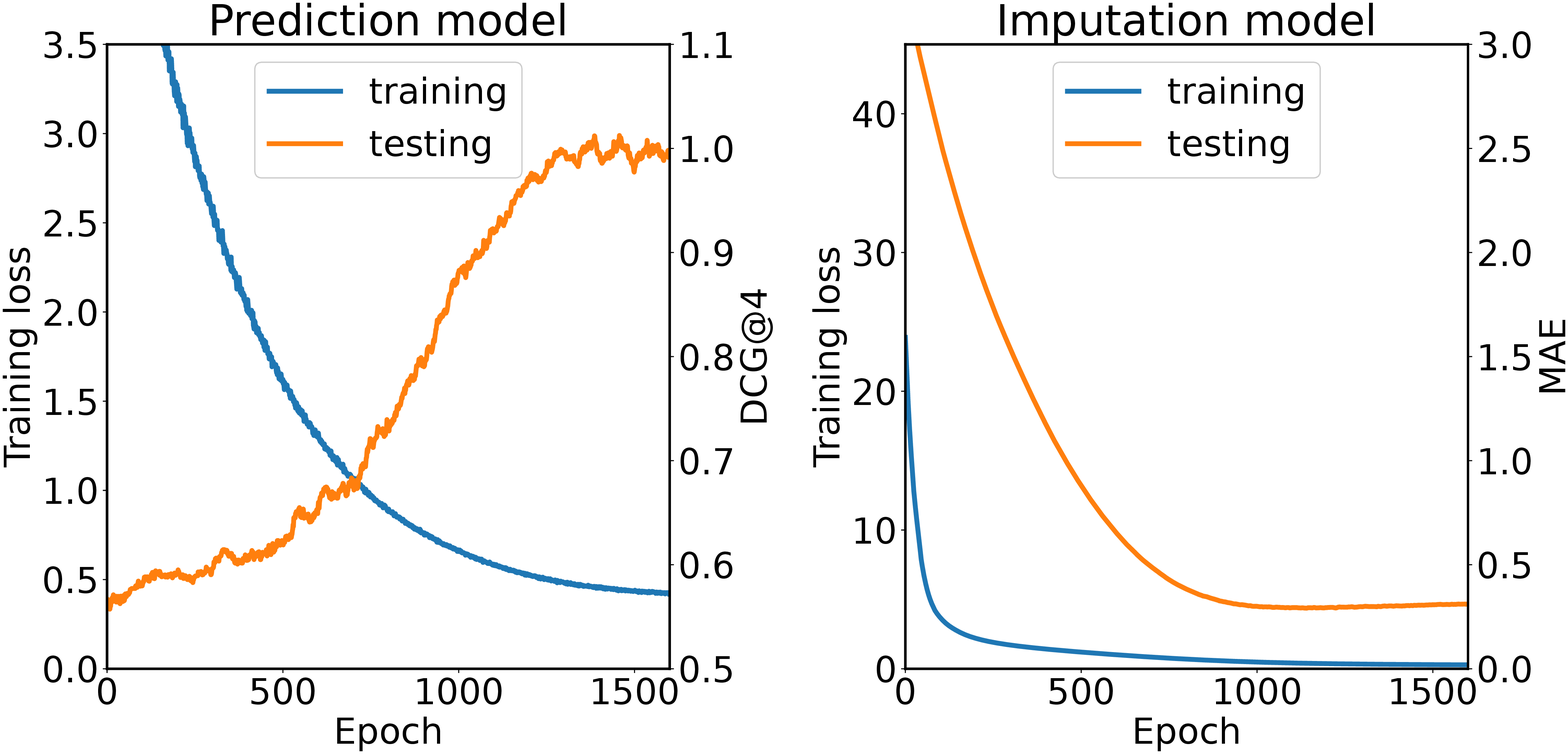}
    \caption{Training curves for the prediction model and the imputation model of MRDR-DL on Coat Shopping.}
    \label{curve}
\end{figure}
\section{Related Work}
\subsection{General Approaches to CVR Estimation}
CVR estimation is a key component of the recommender system because it directly contributes to the final revenue. Due to the inherent similarity, CVR estimation typically refers to the advances made by the CTR prediction task and implicit recommendation in practice, such as traditional models \cite{yd-poi,ctr-fm}, deep learning based models \cite{ctr-deepfm, ctr-din, deepctr-hfm, ww-dl} and reinforcement learning based models \cite{lx-rl1,lx-rl2,lx-rl3}. However, few studies directly investigate the CVR estimation tasks. Previous works often employ traditional models such as logistic regression \cite{cvr-lr1, cvr-lr2} and gradient boosting decision tree \cite{gbdt}, while deep learning techniques like neural network \cite{esmm, esm2} and graph convolution network \cite{gmcm,yd-gcn} are also adopted for CVR estimation. However, the selection bias issue is still underexplored, which has a significant influence on improving performance in practice.

\subsection{Counterfactual Learning from MNAR Data}
Most data for learning the recommender systems are MNAR, which is caused by various biases, including selection bias, conformity bias, exposure bias, etc \cite{survey}. Previous works typically adopt counterfactual learning methods to address these issues. Specifically, EIB methods \cite{pmf-debias, eib} rely on a missing data model to model the missing mechanism. IPS methods employ logistic regression \cite{rat}, expectation-maximization algorithm \cite{wsdm21}, and matrix completion \cite{1bitmc} to estimate the propensity for correcting the mismatch between observed and unobserved data. DR methods \cite{dr-ali,drjl} utilize an imputation model and a prediction model to jointly learn from the MNAR data. Other methods based on information bottleneck \cite{cvib}, meta learning \cite{at}, and causal embedding \cite{cause,dice} have been also explored to address these biases. Among above, IPS and DR has been widely applied to the recommender systems. However, how to specify appropriate error imputation and propensity estimation is a critical issue affecting their unbiasedness, which needs to be resolved in the post-click conversion setting. 

\subsection{Selection Bias in CVR Estimation}
Selection bias is ubiquitous in recommender systems, especially in the CVR estimation task. A few studies have investigated it, achieving effective results. ESMM \cite{esmm} models both the CTR and CVR tasks, using muti-task learning to eliminate the selection bias issue in a heuristic way. Similarly, $ESM^2$ \cite{esm2}, which is also essentially biased, extends ESMM by introducing additional auxiliary tasks. In contrast, GMCM \cite{gmcm} uses the IPS estimator to derive unbiased error evaluation when learning the CVR estimation task. In addition, Multi-IPW and Multi-DR \cite{dr-ali} enjoy the unbiasedness of the IPS and DR estimator by learning both CTR and CVR tasks through multi-task learning. Although they consider the selection bias, the above approaches are evaluated using biased datasets; thus, their experimental results cannot be used to verify their debiasing performance, which is a widely-recognized limitation in practice. Furthermore, a recent work \cite{dr-cvr} proposes to utilize the DR estimator for debiasing ranking metric with post-click conversions, which mainly concerns the evaluation of the recommender systems. In contrast, we focus on debiasing the learning of the CVR estimation, and we use two real-world datasets containing unbiased data to evaluate the debiasing performance.

\section{Conclusion and Future Work}
In this paper, we explore the problem of the selection bias in post-click conversion rate estimation. First, we analyze the bias and the variance of the DR estimator. Then, based on the theoretical analysis, we propose the more robust doubly robust estimator, which reduces the variance of the DR estimator while retaining the double robustness. Finally, we propose a novel double learning approach for MRDR estimator. It can dynamically utilize the information of the prediction model for the imputation model and empirically eliminate the high variance problem of the imputation learning. In the experiments, we verify the effectiveness of the proposed MRDR estimator on semi-synthetic datasets. In addition, we conduct extensive experiments on two real-world datasets to demonstrate the superiority of the proposed debiasing approach. For future work, we believe that the explainability \cite{taert} of the debiasing approach warrants further investigation.

\begin{acks}
This work is supported by National Natural Science Foundation of China (No.61976102, No.U19A2065 and No.61902145).

\end{acks}
\bibliographystyle{plain}
\bibliography{ref.bib}

\begin{thebibliography}{10}

\bibitem{gmcm}
Wentian Bao, Hong Wen, Sha Li, Xiao-Yang Liu, Quan Lin, and Keping Yang.
\newblock Gmcm: Graph-based micro-behavior conversion model for post-click
  conversion rate estimation.
\newblock In {\em Proceedings of the 43rd International ACM SIGIR Conference on
  Research and Development in Information Retrieval}, pages 2201--2210, 2020.

\bibitem{cause}
Stephen Bonner and Flavian Vasile.
\newblock Causal embeddings for recommendation.
\newblock In {\em Proceedings of the 12th ACM conference on recommender
  systems}, pages 104--112, 2018.

\bibitem{mrdr-init}
Weihua Cao, Anastasios~A Tsiatis, and Marie Davidian.
\newblock Improving efficiency and robustness of the doubly robust estimator
  for a population mean with incomplete data.
\newblock {\em Biometrika}, 96(3):723--734, 2009.

\bibitem{survey}
Jiawei Chen, Hande Dong, Xiang Wang, Fuli Feng, Meng Wang, and Xiangnan He.
\newblock Bias and debias in recommender system: A survey and future
  directions.
\newblock {\em arXiv preprint arXiv:2010.03240}, 2020.

\bibitem{dr}
Miroslav Dud\'{\i}k, John Langford, and Lihong Li.
\newblock Doubly robust policy evaluation and learning.
\newblock In {\em Proceedings of the 28th International Conference on
  International Conference on Machine Learning}, page 1097–1104, Madison, WI,
  USA, 2011.

\bibitem{mrdr}
Mehrdad Farajtabar, Yinlam Chow, and Mohammad Ghavamzadeh.
\newblock More robust doubly robust off-policy evaluation.
\newblock In {\em International Conference on Machine Learning}, pages
  1447--1456, 2018.

\bibitem{ctr-deepfm}
Huifeng Guo, Ruiming Tang, Yunming Ye, Zhenguo Li, and Xiuqiang He.
\newblock Deepfm: a factorization-machine based neural network for ctr
  prediction.
\newblock In {\em Proceedings of the 26th International Joint Conference on
  Artificial Intelligence}, pages 1725--1731, 2017.

\bibitem{taert}
Siyuan Guo, Ying Wang, Hao Yuan, Zeyu Huang, Jianwei Chen, and Xin Wang.
\newblock Taert: Triple-attentional explainable recommendation with temporal
  convolutional network.
\newblock {\em Information Sciences}, 567:185--200, 2021.

\bibitem{movielens}
F~Maxwell Harper and Joseph~A Konstan.
\newblock The movielens datasets: History and context.
\newblock {\em Acm transactions on interactive intelligent systems (tiis)},
  5(4):1--19, 2015.

\bibitem{pmf-debias}
Jos{\'e}~Miguel Hern{\'a}ndez-Lobato, Neil Houlsby, and Zoubin Ghahramani.
\newblock Probabilistic matrix factorization with non-random missing data.
\newblock In {\em International Conference on Machine Learning}, pages
  1512--1520. PMLR, 2014.

\bibitem{ctr-fm}
Yuchin Juan, Yong Zhuang, Wei-Sheng Chin, and Chih-Jen Lin.
\newblock Field-aware factorization machines for ctr prediction.
\newblock In {\em Proceedings of the 10th ACM conference on recommender
  systems}, pages 43--50, 2016.

\bibitem{mf}
Yehuda Koren, Robert Bell, and Chris Volinsky.
\newblock Matrix factorization techniques for recommender systems.
\newblock {\em Computer}, 42(8):30--37, 2009.

\bibitem{cvr-lr1}
Kuang-chih Lee, Burkay Orten, Ali Dasdan, and Wentong Li.
\newblock Estimating conversion rate in display advertising from past
  erformance data.
\newblock In {\em Proceedings of the 18th ACM SIGKDD international conference
  on Knowledge discovery and data mining}, pages 768--776, 2012.

\bibitem{yd-gcn}
Yiding Liu, Yulong Gu, Zhuoye Ding, Junchao Gao, Ziyi Guo, Yongjun Bao, and
  Weipeng Yan.
\newblock Decoupled graph convolution network for inferring substitutable and
  complementary items.
\newblock CIKM '20, page 2621–2628, New York, NY, USA, 2020.

\bibitem{yd-poi}
Yiding Liu, T.~Pham, G.~Cong, and Quan Yuan.
\newblock An experimental evaluation of point-of-interest recommendation in
  location-based social networks.
\newblock {\em Proc. VLDB Endow.}, 10:1010--1021, 2017.

\bibitem{gbdt}
Quan Lu, Shengjun Pan, Liang Wang, Junwei Pan, Fengdan Wan, and Hongxia Yang.
\newblock A practical framework of conversion rate prediction for online
  display advertising.
\newblock In {\em Proceedings of the ADKDD'17}, pages 1--9. 2017.

\bibitem{1bitmc}
Wei Ma and George~H Chen.
\newblock Missing not at random in matrix completion: The effectiveness of
  estimating missingness probabilities under a low nuclear norm assumption.
\newblock {\em Advances in Neural Information Processing Systems}, 32, 2019.

\bibitem{esmm}
Xiao Ma, Liqin Zhao, Guan Huang, Zhi Wang, Zelin Hu, Xiaoqiang Zhu, and Kun
  Gai.
\newblock Entire space multi-task model: An effective approach for estimating
  post-click conversion rate.
\newblock In {\em The 41st International ACM SIGIR Conference on Research \&
  Development in Information Retrieval}, pages 1137--1140, 2018.

\bibitem{ratio}
Benjamin~M Marlin and Richard~S Zemel.
\newblock Collaborative prediction and ranking with non-random missing data.
\newblock In {\em Proceedings of the third ACM conference on Recommender
  systems}, pages 5--12, 2009.

\bibitem{fm}
Steffen Rendle.
\newblock Factorization machines.
\newblock In {\em 2010 IEEE International Conference on Data Mining}, pages
  995--1000. IEEE, 2010.

\bibitem{cvr-lr2}
R{\'o}mer Rosales, Haibin Cheng, and Eren Manavoglu.
\newblock Post-click conversion modeling and analysis for non-guaranteed
  delivery display advertising.
\newblock In {\em Proceedings of the fifth ACM International Conference on Web
  Search and Data Mining}, pages 293--302, 2012.

\bibitem{at}
Yuta Saito.
\newblock Asymmetric tri-training for debiasing missing-not-at-random explicit
  feedback.
\newblock In {\em Proceedings of the 43rd International ACM SIGIR Conference on
  Research and Development in Information Retrieval}, pages 309--318, 2020.

\bibitem{dr-cvr}
Yuta Saito.
\newblock Doubly robust estimator for ranking metrics with post-click
  conversions.
\newblock In {\em Fourteenth ACM Conference on Recommender Systems}, pages
  92--100, 2020.

\bibitem{ips-implicit-learn}
Yuta Saito, Suguru Yaginuma, Yuta Nishino, Hayato Sakata, and Kazuhide Nakata.
\newblock Unbiased recommender learning from missing-not-at-random implicit
  feedback.
\newblock In {\em Proceedings of the 13th International Conference on Web
  Search and Data Mining}, pages 501--509, 2020.

\bibitem{rat}
Tobias Schnabel, Adith Swaminathan, Ashudeep Singh, Navin Chandak, and Thorsten
  Joachims.
\newblock Recommendations as treatments: Debiasing learning and evaluation.
\newblock In {\em International Conference on Machine Learning}, pages
  1670--1679, 2016.

\bibitem{eib}
Harald Steck.
\newblock Training and testing of recommender systems on data missing not at
  random.
\newblock In {\em Proceedings of the 16th ACM SIGKDD international conference
  on Knowledge discovery and data mining}, pages 713--722, 2010.

\bibitem{deepctr-hfm}
Yi~Tay, Shuai Zhang, Anh~Tuan Luu, Siu~Cheung Hui, Lina Yao, and Tran
  Dang~Quang Vinh.
\newblock Holographic factorization machines for recommendation.
\newblock In {\em Proceedings of the AAAI Conference on Artificial
  Intelligence}, volume~33, pages 5143--5150, 2019.

\bibitem{doubleDQN}
Hado Van~Hasselt, Arthur Guez, and David Silver.
\newblock Deep reinforcement learning with double q-learning.
\newblock In {\em Proceedings of the AAAI conference on artificial
  intelligence}, volume~30, 2016.

\bibitem{dice}
Xiang Wang, Tinglin Huang, Dingxian Wang, Yancheng Yuan, Zhenguang Liu,
  Xiangnan He, and Tat{-}Seng Chua.
\newblock Learning intents behind interactions with knowledge graph for
  recommendation.
\newblock In {\em {WWW}}, 2021.

\bibitem{drjl}
Xiaojie Wang, Rui Zhang, Yu~Sun, and Jianzhong Qi.
\newblock Doubly robust joint learning for recommendation on data missing not
  at random.
\newblock In {\em International Conference on Machine Learning}, pages
  6638--6647, 2019.

\bibitem{cvib}
Zifeng Wang, Xi~Chen, Rui Wen, Shao-Lun Huang, Ercan~E Kuruoglu, and Yefeng
  Zheng.
\newblock Information theoretic counterfactual learning from
  missing-not-at-random feedback.
\newblock In {\em Neural Information Processing Systems}, 2020.

\bibitem{esm2}
Hong Wen, Jing Zhang, Yuan Wang, Fuyu Lv, Wentian Bao, Quan Lin, and Keping
  Yang.
\newblock Entire space multi-task modeling via post-click behavior
  decomposition for conversion rate prediction.
\newblock In {\em Proceedings of the 43rd International ACM SIGIR Conference on
  Research and Development in Information Retrieval}, pages 2377--2386, 2020.

\bibitem{gmv}
Liang Wu, Diane Hu, Liangjie Hong, and Huan Liu.
\newblock Turning clicks into purchases: Revenue optimization for product
  search in e-commerce.
\newblock In {\em The 41st International ACM SIGIR Conference on Research \&
  Development in Information Retrieval}, pages 365--374, 2018.

\bibitem{wsdm21}
Xinwei Wu, Hechang Chen, Jiashu Zhao, Li~He, Dawei Yin, and Yi~Chang.
\newblock Unbiased learning to rank in feeds recommendation.
\newblock In {\em Proceedings of the 14th ACM International Conference on Web
  Search and Data Mining}, pages 490--498, 2021.

\bibitem{ips-implicit}
Longqi Yang, Yin Cui, Yuan Xuan, Chenyang Wang, Serge Belongie, and Deborah
  Estrin.
\newblock Unbiased offline recommender evaluation for missing-not-at-random
  implicit feedback.
\newblock RecSys '18, page 279–287, New York, NY, USA, 2018.

\bibitem{ww-dl}
Wenwen Ye, Shuaiqiang Wang, Xu~Chen, Xuepeng Wang, Zheng Qin, and Dawei Yin.
\newblock Time matters: Sequential recommendation with complex temporal
  information.
\newblock In {\em Proceedings of the 43rd International ACM SIGIR Conference on
  Research and Development in Information Retrieval}, SIGIR '20, page
  1459–1468, 2020.

\bibitem{dr-ali}
Wenhao Zhang, Wentian Bao, Xiao-Yang Liu, Keping Yang, Quan Lin, Hong Wen, and
  Ramin Ramezani.
\newblock Large-scale causal approaches to debiasing post-click conversion rate
  estimation with multi-task learning.
\newblock In {\em Proceedings of The Web Conference 2020}, pages 2775--2781,
  2020.

\bibitem{ctr-din}
Guorui Zhou, Xiaoqiang Zhu, Chenru Song, Ying Fan, Han Zhu, Xiao Ma, Yanghui
  Yan, Junqi Jin, Han Li, and Kun Gai.
\newblock Deep interest network for click-through rate prediction.
\newblock In {\em Proceedings of the 24th ACM SIGKDD International Conference
  on Knowledge Discovery \& Data Mining}, pages 1059--1068, 2018.

\bibitem{lx-rl1}
Lixin Zou, Long Xia, Zhuoye Ding, Jiaxing Song, Weidong Liu, and Dawei Yin.
\newblock Reinforcement learning to optimize long-term user engagement in
  recommender systems.
\newblock In {\em Proceedings of the 25th ACM SIGKDD International Conference
  on Knowledge Discovery and Data Mining}, KDD '19, page 2810–2818, 2019.

\bibitem{lx-rl2}
Lixin Zou, Long Xia, Pan Du, Zhuo Zhang, Ting Bai, Weidong Liu, Jian-Yun Nie,
  and Dawei Yin.
\newblock Pseudo dyna-q: A reinforcement learning framework for interactive
  recommendation.
\newblock In {\em Proceedings of the 13th International Conference on Web
  Search and Data Mining}, WSDM '20, page 816–824, 2020.

\bibitem{lx-rl3}
Lixin Zou, Long Xia, Pan Du, Zhuo Zhang, Ting Bai, Weidong Liu, Jian-Yun Nie,
  and Dawei Yin.
\newblock Pseudo dyna-q: A reinforcement learning framework for interactive
  recommendation.
\newblock In {\em Proceedings of the 13th International Conference on Web
  Search and Data Mining}, WSDM '20, page 816–824, 2020.

\end{thebibliography}
\end{document}